\newtheorem{theorem}{Proposition}
\newtheorem{definition}{Definition}
\newcommand{\blind}{0}
\begin{document}

\def\spacingset#1{\renewcommand{\baselinestretch}%
{#1}\small\normalsize} \spacingset{1}

%%%%%%%%%%%%%%%%%%%%%%%%%%%%%%%%%%%%%%%%%%%%%%%%%%%%%%%%%%%%%%%%%%%%%%%%%%%%%%

\if0\blind
{
  \title{\bf Unsupervised spectral-band feature identification for optimal process discrimination}  
  \author{Akash Tiwari
    and
    Satish Bukkapatnam \thanks{
    The authors gratefully acknowledge grant support from \textit{Lawrence Livermore National Laboratory}, \textit{National Science Foundation}, and \textit{Texas A\&M University X-grants }}\hspace{.2cm}\\
    \\
    Wm Michael Barnes '64 Department of Industrial and Systems Engineering, \\
    Texas A \& M University}
  \maketitle
  
} \fi

\if1\blind
{
  \bigskip
  \bigskip
  \bigskip
  \begin{center}
    {\LARGE\bf Unsupervised spectral-band identification for optimal process discrimination}
\end{center}
  \medskip
} \fi

\bigskip

\vspace{-0.5 in}
\begin{abstract}

Changes in real-world dynamic processes are often described in terms of differences in energies $\textbf{E}(\underline{\alpha})$ of a set of spectral-bands $\underline{\alpha}$. 
Given continuous spectra of two classes $A$ and $B$, or in general, two stochastic processes $S^{(A)}(f)$ and $S^{(B)}(f)$, $f \in \mathbb{R}^+$, we address the problem of identifying a subset of intervals of $f$ called spectral-bands $\underline{\alpha} \subset \mathbb{R}^+$ such that the energies $\textbf{E}(\underline{\alpha})$ of these bands can optimally discriminate between the two classes. 
This ubiquitous problem has received very little attention thus far.
We introduce EGO-MDA, an unsupervised method to identify optimal spectral-bands $\underline{\alpha}^*$ for given samples of spectra from two classes. 
EGO-MDA employs a statistical approach that iteratively minimizes an adjusted multinomial log-likelihood (deviance) criterion $\mathcal{D}(\underline{\alpha},\mathcal{M})$.
Here, Mixture Discriminant Analysis (MDA) aims to derive MLE of two GMM distribution parameters, i.e.,  $\mathcal{M}^* = \underset{\mathcal{M}}{\rm argmin}~\mathcal{D}(\underline{\alpha}, \mathcal{M})$ and identify a classifier that optimally discriminates between two classes for a given spectral representation.
% \textcolor{blue}{It employs maximum likelihood estimate to fit two Gaussian Mixture Models, i.e.,  $\mathcal{M}^* = \underset{\mathcal{M}}{\rm argmin}~\mathcal{D}(\underline{\alpha}, \mathcal{M})$.}
% \textcolor{blue}{... employs a statistical approach that iteratively minimizes an adjusted multinomial log-likelihood (deviance) criterion $\mathcal{D}(\underline{\alpha},\mathcal{M})$.}
The Efficient Global Optimization (EGO) finds the spectral-bands $\underline{\alpha}^* =  \underset{\underline{\alpha}}{\rm argmin}~\mathcal{D}(\underline{\alpha},\mathcal{M})$ for given GMM parameters $\mathcal{M}$.
% The optimization is performed based on an adjusted multinomial log-likelihood (deviance) criterion $\mathcal{D}(\underline{\alpha},\mathcal{M})$.
% EGO-MDA iteratively employs Mixture Discriminant Analysis (MDA) ---
% which uses maximum likelihood estimation to fit two Gaussian Mixture Models, i.e.,  $\mathcal{M}^* = \underset{\mathcal{M}}{\rm argmin}~\mathcal{D}(\underline{\alpha}, \mathcal{M})$ 
% --- 
% together with an Efficient Global Optimization (EGO) that finds spectral-bands . 
While MDA has already existed for a few decades, pathological cases of low separation between mixtures and model misspecification on the classifier performance has been overlooked.
Under pathological cases, we discuss the effect of the sample size and the number of iterations on the estimates of parameters $\mathcal{M}$.
A numerical case study as well as an engineering application involving determination of optimal spectral-bands for anomaly tracking is presented.
Optimal spectral-bands from EGO-MDA achieve at least 70\% improvement in the median deviance relative to other methods tested.
% \fixme{(UPDATED)expand the abstract}
% \fixme{200 or fewer words}

\end{abstract}

\noindent%
{\it Keywords:}  Efficient Global Optimization, Mixture Discriminant Analysis, Gaussian Mixture Models, EM algorithm, Frequency banding, Sub-bands
\vfill

\newpage
\spacingset{2} % DON'T change the spacing!

\section{Introduction}

Real-world processes are often distinguished based on the differences in their frequency spectra $S(f), f \in \mathbb{R}^+$. Example problems include classifying between machine vibrations under normal versus anomalous conditions, or between the voices of two people or animals. The differences between spectra, in turn, are quantified in terms of the differences in energy content
$E(\alpha_l) = \int_{f \in \alpha_l} S(f)df~\forall l \in [L]$
over certain spectral-bands $\underline{\alpha} = \{\alpha_l: \forall l \in [L]\}$, a set of disjoint intervals $\alpha_l \in f$. Identification of such spectral-bands $\underline{\alpha}$,
that are effective in discriminating between two classes of spectra, is termed the spectral-banding problem. 

\begin{definition}[Spectral-banding]
For two mean-square integrable stochastic processes $S^{(r)}(f)$, representing spectra belonging to two classes, $r = \{A,B\}$,
the spectral-banding problem is to determine the non-overlapping bands $\underline{\alpha} = \{\alpha_l: \forall l \in [L]\}$ from $N^{(r)}$ realizations of spectra from each class, such that the resulting energies $E(\alpha_l): \forall l \in [L], r \in \{A,B\}$ optimally discriminate (i.e. achieve minimum deviance  \parencite{hastie2009elements}) 
between the two classes of spectra. 
\end{definition}

As noted, the spectral-banding problem arises in many engineering and scientific contexts. 
In addition to enhancing classification accuracy, optimal spectral-bands also explain novel phenomena in underlying physical domains. For example, the identification of spectral-bands belonging to higher frequency can lead to the discovery of small-scale short-time phenomena as described in a case study of real-world application in Section 4.

Surprisingly, this problem has thus far eluded rigorous treatment. 
A few studies in mechanical systems literature have addressed the spectral-banding problem specifically for identifying time-localized impulses in signals for anomaly detection and diagnosis.
\textcite{WANG20111750} use spectral kurtosis, a statistical quantity measuring the signal's peakedness in the frequency domain. 
\textcite{moshrefzadeh2018autogram} identify spectral-bands that detect periodic and sharp energy peaks in the signal.
Such approaches do not aim to identify spectral-bands that discriminate between two different processes based on energy differences.
Additionally, an isolated work in speech processing literature by \textcite{6288298} outline a plausible approach for spectral-banding under certain band shape assumptions for a speech discrimination problem.
The spectral-bands are limited by the assumptions, such as covering the entire frequency range with triangular kernels and having the boundaries of the bands as the peak frequency of the neighboring bands.
These restrictions prevent the generalization of the approach to real-world industrial and engineering processes. Furthermore, their approach uses a gradient-based search strategy for identifying optimal spectral-bands. Gradients may not be available for complex systems and processes, limiting the extensibility of such approaches.

In Machine Learning literature, the spectral-banding problem reduces to a class of feature engineering problems for classification described by \textcite{guyon2003introduction}.
Pointwise hypothesis Testing approach developed by \textcite{cox2008pointwise} has been used to identify discriminating intervals.
\textcite{prakash2021gaussian} develop functional test statistics based on distribution assumption for realizations of different classes. These test statistics are used to identify the intervals that discriminate between the classes.
Hypothesis testing approaches are limited by assumptions on the shape and smoothness  of the spectra and cannot accommodate for non-stationarity that may exist in different intervals of the spectra. Also, the large number of enumerations makes the approach intractable and unsuitable for continuous spectra. 

\begin{figure}
    \begin{center}
    \includegraphics[width = 3 in]{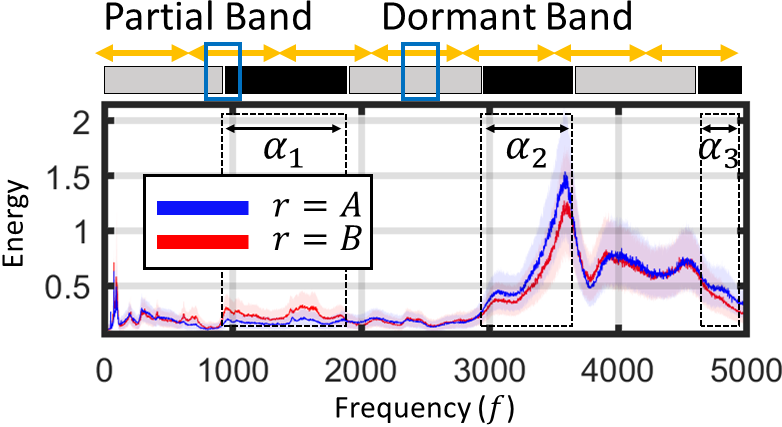}
    \end{center}
    \vspace{-0.1 in}
    \caption{Frequency spectra $S^{(r)}(f)$ for a manufacturing process under process condition $r=A$(blue) and $r=B$(red) with highlighted dormant and non-dormant bands. Uniform size bands are illustrated in yellow.
    }
    \label{fig:bird_mfg_spec}
\end{figure}

To further understand the underlying challenges, let us consider an example of three spectral-bands $\underline{\alpha} = \{\alpha_1, \alpha_2, \alpha_3\}$ that capture the differences between two classes of frequency spectra (see Figure~\ref{fig:bird_mfg_spec}) from a real-world process. These spectra occur at two different conditions, denoted by classes $r=A$ (blue) and $r=B$ (red), respectively. Silhouettes in Figure~\ref{fig:bird_mfg_spec} represent the energy variation. These spectral-bands $\underline{\alpha}$ (black bands) visually bear the most conspicuous differences between the spectra from the two classes, and the 
energy vector $\textbf{E}(\underline{\alpha}) = [ E^{(r)}(\alpha_1),$
$ E^{(r)}(\alpha_2), E^{(r)}(\alpha_3)]$ optimally classify between the two spectra. 
Intervals complement to $\underline{\alpha}$ are non-discriminating and can be considered dormant (grey bands). 
Sub-optimal banding, 
such as the use of fixed-size bands (illustrated in yellow), leads to the
inclusion
of spurious dormant bands (do not capture the relevant phenomenon) and partial bands (do not fully capture relevant bands). This can result in spectral-bands with poor signal-to-noise ratio (SNR) that lowers classification accuracy.

We present EGO-MDA, an unsupervised method to identify $\underline{\alpha}^*$ 
that overcomes limitations from assumptions on the shape of spectra and spectral-bands. EGO-MDA adopts a gradient-free sequential search strategy. This avoids an enumerative search which can otherwise become restrictive, especially in problems with very high sampling rates. 
The major contributions of our work are summarized as follows. 
(1) A rigorous solution approach of EGO-MDA is presented to address the spectral-banding problem. Although commonly encountered in several engineering domains, it has not received much attention in research in the signal processing and machine learning literature.
(2) The effects of certain hyperparameters on the performance of EGO-MDA are presented with benchmark spectral-banding problems for assessment. EGO-MDA outperforms other methods tested (with over 70\% lower deviance). 
Spectral-bands from EGO-MDA can further be used for real-time monitoring of numerous engineering processes, 
as well as for optimal channel allocation in diverse engineering and scientific domains. 

The remainder of this paper is organized as follows: 
In Section 2, we provide mathematical description for the spectral-bands and formulate the objective function for the spectral-banding problem. 
In Section 3, we present the EGO-MDA Algorithm and discuss theoretical considerations of sample size and iterations of the EM algorithm based on the separation between the Gaussian mixing components and mixture model misspecification. 
In Section 4, we discuss two case-studies of EGO-MDA for spectral-banding in a synthetic data set and a real-world application of the shell polishing process. 
Concluding remarks on the novelty of EGO-MDA in spectral-banding, performance limitations under pathological cases of true signals, improvements of the method through parameter tuning, 
and extension to image segmentation are discussed in Section 5.

\section{EGO-MDA for Spectral-Banding}

\noindent EGO-MDA for spectral-banding iteratively uses Mixture Discriminant Analysis (MDA) \parencite{hastie1996discriminant} to obtain the best classifier with model parameters $\mathcal{M}^*$ under fixed $\underline{\alpha}$,
and
an Efficient Global Optimization (EGO) \parencite{Jones1998} to sequentially search for the set of optimally discriminating bands $\underline{\alpha}^*$.
EGO-MDA performance is sensitive to several hyperparameters which can affect the quality of the optimal spectral-bands $\underline{\alpha}^*$ obtained. This section introduces statistical modeling of spectral-band energy $\textbf{E}(\underline{\alpha})$, criteria for assessing spectral-band quality and pathological cases often encountered in practice which affect EGO-MDA performance.
An example of the spectral-bands $\underline{\alpha}=\{\alpha_1,\ldots,\alpha_l,\ldots,\alpha_L\}$ and the region of the frequency spectra $S(f)$ corresponding to spectral-band energy $\{E(\alpha_1),\ldots,E(\alpha_l),\ldots,E(\alpha_L)\}$ is illustrated in Figure~\ref{fig:band_overlay}.

\begin{figure}
\begin{center}
\includegraphics[width = 6 in,height = 2 in]{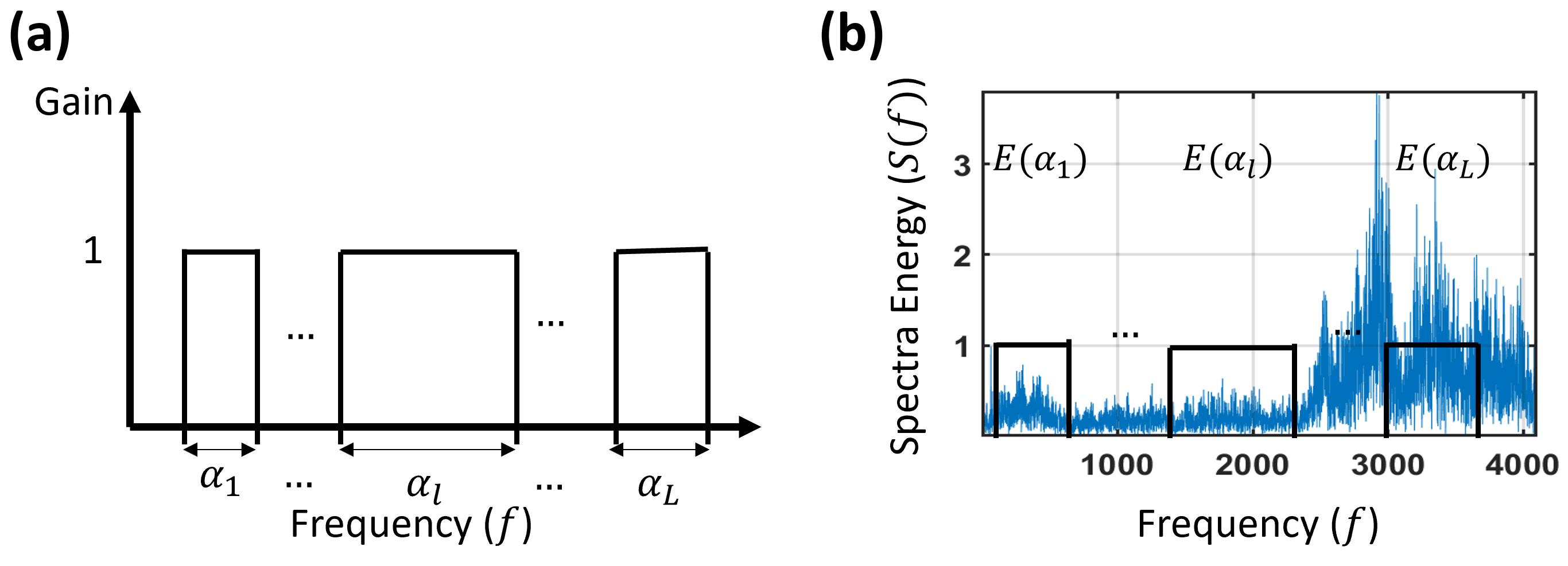}
\end{center}
\vspace{-0.1 in}
\caption{Schematic of spectral-bands and spectral-band energy (a) Spectral-bands $\underline{\alpha}$ in the frequency domain.
(b) Overlay of spectral-bands $\underline{\alpha}$ on frequency spectra $S(f)$.}
\label{fig:band_overlay}
\end{figure}

Here, energy $\textbf{E}^{(r)}(\underline{\alpha})$ in spectral-bands of class $r$ is modeled using multivariate $d$-dimensional GMM distributions

\vspace{-0.3 in}
\begin{gather}
    \textbf{E}^{(r)}(\underline{\alpha}) \mid r 
    \sim
    \sum_{k=1}^{K}\pi^{(r)}_{k}\mathcal{N}({\boldsymbol\mu}^{(r)}_k,\Sigma)    
    \label{eq: GMM_dist}
\end{gather}

\noindent with $K$ mixing components, and the model parameters given by

\vspace{-0.3 in}
\begin{gather}
    \mathcal{M}^{(r)} 
    =
    \{\pi^{(r)}_k, {\boldsymbol\mu}^{(r)}_k, \Sigma | k \in \{1,\ldots,K\}\}
    \label{eq:GMM_param_set}
\end{gather}

\noindent where $\pi^{(r)}_k$ and ${\boldsymbol\mu}^{(r)}_k$ are the weight and $d$-dimensional mean vector for the $k^{\rm th}$ mixing component, respectively for class $r$. $\Sigma$ is the common covariance matrix for each mixing component. Centroids of the $k$ mixing components in class $r$ are collectively denoted by $\boldsymbol\mu^{(r)} = \{\boldsymbol\mu^{(r)}_1,\ldots,\boldsymbol\mu^{(r)}_K\}$.

EGO-MDA seeks spectral-bands $\underline{\alpha}$ that minimizes adjusted \textit{deviance} given by  

\vspace{-0.3 in}
{\begin{gather}
    \mathcal{D}(\underline{\alpha},\mathcal{M})
    =
    -2\sum_{r = \{A,B\}}\sum{\rm log} \hat{p}(r|\textbf{E}^{(r)}(\underline{\alpha})) + \eta{\rm log}(|\underline{\alpha}|)
    \label{deviance}
 \end{gather}

}

\noindent as an indicator of the discrimination between two classes. Here, $\hat{p}(r|\textbf{E}^{(r)}(\underline{\alpha}))$ is the class probability for the realization $\textbf{E}^{(r)}(\underline{\alpha})$, $|\underline{\alpha}|$ is the total sum of all spectral-band widths, $\eta$ is the penalty coefficient, and $\mathcal{M} = \mathcal{M}^{(A)} \cup \mathcal{M}^{(B)}$. The penalty coefficient $\eta$ is introduced to favor compact spectral-bands for achieving a desirable spectral-band configuration, i.e., larger values of $\eta$ favor bands of smaller size.

EGO-MDA essentially addresses the spectral-banding problem by iterating between problems of solving $\underset{\underline{\alpha}}{\rm argmin} ~\mathcal{D}(\underline{\alpha},\mathcal{M})$ using EGO search and $\underset{\mathcal{M}}{\rm argmin}~\mathcal{D}(\underline{\alpha},\mathcal{M})$ by fitting GMM parameters $\mathcal{M}$ using MDA.
The response surface of $\mathcal{D}(\underline{\alpha},\mathcal{M})$ can have multiple local optima as observed by \textcite{jin2016local} for log-likelihood surfaces. EGO-MDA can be effective in such scenarios because of the exploration/exploitation trade-off in EGO.

MDA uses the EM algorithm to obtain MLE of the GMM model parameters $\mathcal{M}$. 
Hyperparameters affecting the estimation of $\mathcal{M}$ are initial centroid $\boldsymbol\mu^{(r)0}_k$, assumptions on $\Sigma$ and $K$, and  $\pi^{(r)}_k$. User-defined parameters including  ${\rm N}^{(r)}$ and the number of iterations $t$ of the EM algorithm can also directly affect the estimation of $\mathcal{M}$. Pre-mature termination of the iterations 
provide inaccurate estimates and convergence guarantee cannot be provided for insufficient sample size.

We study two scenarios commonly encountered in practice that affect the performance of the EGO-MDA algorithm: (i) separation of the centroids, and (ii) initial specification of the number of centroids.

\vspace{-0.25 in}
\subsection{Separation of Mixtures}

\noindent Reducing separation in terms of distance between centroids $\rho^{(r)}_{ij}$ of mixing components $k=i$ and $k=j$ in class $r$ demand an increasing number of samples ${\rm N}^{(r)}$ and iterations ${\rm t}$ of the EM algorithm for convergence to accurate estimates.
The centroid parameter estimate of iteration ${\rm t}$ is indicated by superscript as $\boldsymbol\mu^{(r)t}$.
The estimate convergence in terms of error ${\rm E}(\boldsymbol\mu^{(r)})=\underset{k}{\rm max}||\boldsymbol\mu_k^{(r)}-\boldsymbol\mu_k^{(r)*} ||$ is captured by the following proposition.

\begin{figure*}
\begin{center}
\includegraphics[width=\textwidth]{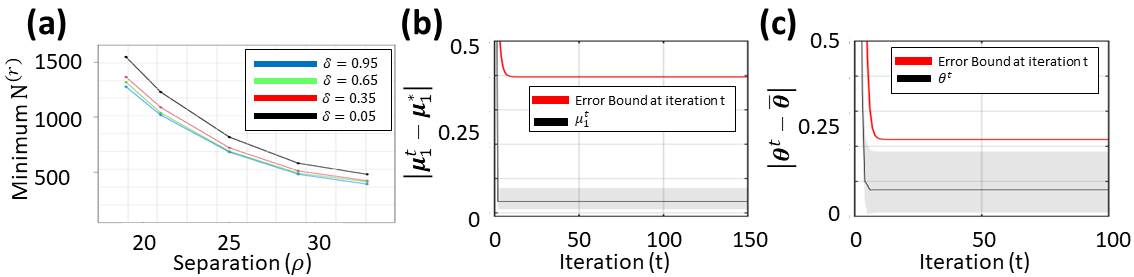}
\end{center}
\vspace{-0.1 in}
\caption{(a) Variation of the minimum sample size requirement with $\rho$ and $\delta$ per Proposition 1.
(b) Variation in estimation error of ${\boldsymbol\mu}_1$ for small $\rho$. (c) Variation in estimation error of ${\boldsymbol\theta}$ when $K$ is misspecified.
}
\label{fig:num_sim_1}
\end{figure*}

\begin{theorem}

Let $R^{(r)}_{max} = \underset{i \neq j}{\rm max}\rho^{(r)}_{ij}$,$R^{(r)}_{min} = \underset{i \neq j}{\rm min}\rho^{(r)}_{ij}$, $\pi^{(r)}_{min}=\underset{k}{\rm min}\pi^{(r)}_k$, $R_k^{(r)} = \underset{j \neq k}{\rm min}\rho^{(r)}_{jk}$ 
and $\mathcal{U}^{(r)}_{\lambda} = \{\boldsymbol\mu^{(r)}:||\boldsymbol\mu_k^{(r)}-\boldsymbol\mu_k^{(r)*}|| \leq \lambda R_k^{(r)}~\forall k \}$ and if

\vspace{-0.25 in}
\begin{gather}
\frac{{\rm N^{(r)}}}{{\rm log}({\rm N^{(r)}})} 
>
C\frac{Kd{\rm log}(\frac{\Tilde{C}}{\delta})}{\pi^{(r)}_{\rm min}}{\rm max} \bigg( 1,\frac{1/\lambda^2\pi^{(r)}_{\rm min}(R^{{(r)}}_{\rm min})^2}{(1-2\lambda)^2} \bigg)
\label{eq:ss_req}
\end{gather}

\noindent for some suitable constant $C$,$C_1$, $\Tilde{C} = 100K^2R^{(r)}_{max}(\sqrt{d}+2R^{(r)}_{max})^2$, $\lambda \in (0,\frac{1}{2})$ and $\delta \in (0,1)$, then with probability at least 1-$\delta$, for all iterations t, ${\boldsymbol\mu}^{(r) \rm t} \in \mathcal{U}^{(r)}_{\lambda}$ 

\vspace{-0.3 in}
{\begin{gather}
        {\rm E}(\boldsymbol\mu^{(r)}_k) \leq
         \frac{1}{2^{\rm t}}\underset{k}{\rm max}{\rm E}({\boldsymbol\mu}_k^{(r) 0}) + \frac{C_1/\pi^{(r)}_k}{(1-2\lambda)}\sqrt{\frac{{\rm log}(\frac{\Tilde{C}{\rm N^{(r)}}}{\delta})}{{\rm N^{(r)}}/Kd}}
        \label{eq:convg_1}
    \end{gather}
}

\end{theorem}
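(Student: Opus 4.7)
The plan is to adapt the now-standard population-vs-sample EM analysis (in the spirit of Balakrishnan--Wainwright--Yu and subsequent refinements for GMMs) to the mixture-centroid setting considered here. Concretely, I would split the argument into a deterministic contraction piece for the population EM operator and a uniform concentration piece that controls the deviation of the sample EM operator from it, and then combine them by induction on $\mathrm{t}$.

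First I would set up notation: let $M(\boldsymbol\mu)$ denote the population EM map that takes a parameter iterate to its EM update under the true mixture $\mathcal{M}^{(r)*}$, and let $M_{\mathrm{N}}(\boldsymbol\mu)$ denote its finite-sample analogue based on the $\mathrm{N}^{(r)}$ observed energy vectors. The EM iterates satisfy $\boldsymbol\mu^{(r)\mathrm{t}+1}=M_{\mathrm{N}}(\boldsymbol\mu^{(r)\mathrm{t}})$. The goal is to bound $\mathrm{E}(\boldsymbol\mu^{(r)\mathrm{t}+1})$ by the triangle inequality
\[
\mathrm{E}(\boldsymbol\mu^{(r)\mathrm{t}+1}) \;\le\; \underbrace{\|M(\boldsymbol\mu^{(r)\mathrm{t}})-\boldsymbol\mu^{(r)*}\|}_{\text{population contraction}} \;+\; \underbrace{\|M_{\mathrm{N}}(\boldsymbol\mu^{(r)\mathrm{t}})-M(\boldsymbol\mu^{(r)\mathrm{t}})\|}_{\text{sample deviation}}.
\]

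Second, I would establish the population contraction $\|M(\boldsymbol\mu)-\boldsymbol\mu^{*}\|\le \tfrac{1}{2}\|\boldsymbol\mu-\boldsymbol\mu^{*}\|$ for every $\boldsymbol\mu\in\mathcal{U}^{(r)}_{\lambda}$. This reduces to bounding the misclassification probability of the Bayes-like responsibility weights under a GMM with common covariance: away from the boundary (i.e.\ for $\|\boldsymbol\mu_k-\boldsymbol\mu_k^{*}\|\le \lambda R_k^{(r)}$ with $\lambda<1/2$), the responsibilities concentrate on the correct component because the nearest competing centroid is at distance at least $(1-2\lambda)R^{(r)}_{\min}$. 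A standard sub-Gaussian tail computation on the log-likelihood ratio then yields the contraction factor $\tfrac12$, provided $R^{(r)}_{\min}$ is large enough relative to the noise, which is what the $(1-2\lambda)^2$ denominator in \eqref{eq:ss_req} encodes.

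Third, I would control the sample deviation uniformly over $\mathcal{U}^{(r)}_{\lambda}$ by a covering/empirical-process argument. Cover the set of admissible centroids at scale proportional to $R^{(r)}_{\max}/\widetilde C$, use Lipschitzness of $M_{\mathrm{N}}-M$ in $\boldsymbol\mu$, and apply a Bernstein/Hoeffding bound to the per-component gradient expressions at each cover point. Taking a union bound produces the $\log(\widetilde C\, \mathrm{N}^{(r)}/\delta)$ factor, while the $Kd$ in the denominator of \eqref{eq:convg_1} comes from the dimension of the parameter space being covered, and the $1/\pi_k^{(r)}$ from the minimum expected mass assigned to each mixing component. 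The sample size condition \eqref{eq:ss_req} is exactly what is needed so that this statistical error is smaller than $\lambda R_k^{(r)}/2$, keeping every iterate inside $\mathcal{U}^{(r)}_{\lambda}$ with probability $1-\delta$.

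Finally, with both pieces in hand I would unroll the recursion: on the high-probability event, an easy induction gives
\[
\mathrm{E}(\boldsymbol\mu^{(r)\mathrm{t}}) \;\le\; \tfrac{1}{2}\,\mathrm{E}(\boldsymbol\mu^{(r)\mathrm{t}-1}) + \varepsilon_{\mathrm{stat}} \;\le\; \tfrac{1}{2^{\mathrm{t}}}\max_k \mathrm{E}(\boldsymbol\mu_k^{(r)0}) + 2\varepsilon_{\mathrm{stat}},
\]
and identifying $\varepsilon_{\mathrm{stat}}$ with the second term of \eqref{eq:convg_1} (absorbing numerical constants into $C_1$, and the $1/(1-2\lambda)$ factor from the worst-case distortion of responsibilities at the edge of the basin) closes the proof. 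I expect the main obstacle to be the uniform-concentration step: getting the correct $\pi^{(r)}_{\min}$-dependence and the right dimensional scaling $Kd$ requires a careful truncation of the tails of the GMM likelihood and a matching local Lipschitz bound on $M$, rather than just a pointwise Hoeffding; the population-contraction step, by contrast, is essentially a sub-Gaussian computation once the basin $\mathcal{U}^{(r)}_\lambda$ is fixed.
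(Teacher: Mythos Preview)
Your proposal is correct in substance: the population-contraction plus uniform-sample-deviation decomposition you outline is precisely the machinery that underlies the bound in \eqref{eq:convg_1}, and your identification of where the $(1-2\lambda)$, $\pi^{(r)}_{\min}$, $Kd$, and $\log(\widetilde C\,\mathrm{N}^{(r)}/\delta)$ factors arise is accurate. However, the paper takes a much shorter route. Its entire argument is a one-line reduction: in MDA the GMM parameters $\mathcal{M}^{(r)}$ are estimated \emph{independently} for each class $r$, so the finite-sample EM convergence theorem already established for a single GMM (Theorem~3.3 of Segol and Nadler, 2021) applies verbatim to each class, and \eqref{eq:ss_req}--\eqref{eq:convg_1} are just that theorem restated with the class superscript $(r)$ attached. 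No new analysis is performed.

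What each approach buys: the paper's reduction is essentially free once one notices that the class-conditional likelihoods decouple, and it inherits the sharpest available constants from the cited work without redoing any empirical-process arguments. Your route, by contrast, reconstructs the cited theorem from first principles; it is self-contained and makes transparent \emph{why} the sample-size condition has the form it does, but it is substantially more labor and risks looser constants in the covering step. For the purposes of this paper the only idea actually needed is the decoupling observation---you should state that explicitly and then invoke the existing GMM result rather than reprove it.
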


\begin{proof}
In MDA, the GMM parameters are estimated independently for each class $r$. Therefore Equation (\ref{eq:convg_1})
follows from Theorem 3.3 in \textcite{segol2021improved} presented for the case of GMM.
\end{proof}

The proposition provides convergence guarantees and guidance on the choice of ${\rm N}^{(r)}$. The convergence bounds on estimation error ${\rm E}(\boldsymbol\mu^{(r)}_k)$ would guide the adjustment of sample size ${\rm N}^{(r)}$ and the iterations of the EM algorithm ${\rm t}$. 

The following numerical example illustrates Proposition 1. $X_1,\ldots,X_{{\rm N}^{(r)}} \in \mathbb{R}$ ($d = 1$) are drawn from a GMM with $K=2$, $\pi_1 = \pi_2 = 0.5$, unit variance and true centroids ${\boldsymbol\mu}^*_1=3$, ${\boldsymbol\mu}^*_2=21$. The convergence result in Equation (\ref{eq:convg_1}) holds when the minimum requirement on ${\rm N^{(r)}}$ specified by Equation (\ref{eq:ss_req}) is satisfied.  Figure~\ref{fig:num_sim_1}(a) illustrates the requirement on ${\rm N}^{(r)}$ with increasing separation $\rho$. Following this, we choose sample a size of ${\rm N}^{(r)}=1500$. We set $\lambda = 0.05$ and $\delta=0.05$, which ensures conditions for convergence in proposition 1 are satisfied. The number of iterations required for this convergence is guided by the bound in “red” in Figure~\ref{fig:num_sim_1}(b). The example illustrates estimate convergence for a sufficiently large number of iterations. 

\vspace{-0.25 in}
% \subsection{Number of Mixtures $K$}
\subsection{Mixture Model Misspecification}

\noindent In the scenario of misspecified GMM, specifically the case of underspecified GMM, fewer mixing components $K$ than of the true distribution are considered. Underspecified models are often encountered when the mixing components are closely located.

We consider the true distribution $g^{*}$ and the model $g$ being fit given by 

\vspace{-0.3 in}
\begin{flalign}
\begin{split}
    g^*&=\sum_{\omega \in \{1,2\}}\frac{1}{4}\mathcal{N}(\theta^*((-1)^\omega\rho-1),1) + \frac{1}{2}\mathcal{N}(\theta^*,1)
    \label{sim2true}
\end{split}
\end{flalign}

\vspace{-0.3 in}
\begin{flalign}
\begin{split}
    g & =
    \frac{1}{2}\mathcal{N}(\theta,1) + \frac{1}{2}\mathcal{N}(-\theta,1)
    \label{model2}
\end{split}
\end{flalign}

\noindent where in Equation (\ref{sim2true}), the true distribution $g^*$ with $K=3$ has true centroid as a function of $\theta^*$ and separation $\rho$. The model $g$ being fit in Equation (\ref{model2}) however has $K=2$. Both distributions $g$ and $g^*$ have unit variance ($\sigma = 1$) for each of the Gaussian components. 

In the underspecified case, \textcite{dwivedi2018theoretical} show that estimates of the EM algorithm instead converge to ${\bar{\theta}} \in \underset{\theta}{\rm argmin}~{\rm KL}(g^*,g)$. The following proposition captures this. 

\begin{theorem}
Let ${\rm N}^{(r)} \geq  c_1{\rm log}(\frac{1}{\delta})$ for $\delta \in (0,1)$;  $\xi = \frac{|\theta^*|}{\sigma}$, $\mathbb{B}(\theta,R)$ as a ball of radius $R$ centered at $\theta$, $M_n:\mathbb{R} \mapsto \mathbb{R}$ be the EM operator, and  $\theta^0 \in \mathbb{B}(\bar{\theta},\frac{|\bar{\theta}|}{4})$. Under the assumptions of Theorem 1 in \textcite{dwivedi2018theoretical}, with probability at least 1-$\delta$, the sample-based EM sequence $\theta^{t+1} = M_n(\theta^t)$ for model (\ref{sim2true}) satisfies 

\vspace{-0.3 in}
\begin{gather}
    |\theta^t - \bar{\theta}|
    \leq
    \gamma^t|\theta^0-\bar{\theta}|+\frac{c_2}{1-\gamma}|\theta^*|((\theta^*)^2+\sigma^2)\sqrt{\frac{{\rm log}{\frac{1}{\delta}}}{{\rm N}^{(r)}}}
\end{gather}
\label{eq:convg_2}

\vspace{-0.3 in}
\noindent
where $\gamma \leq e^{-c'\xi^2}$, $c_1$, $c_2~{ \rm and}~c'$ are suitable constants.
\end{theorem}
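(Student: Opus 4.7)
The plan is to treat this as a standard instance of the ``contraction $+$ perturbation'' template that underlies Theorem 1 of \textcite{dwivedi2018theoretical}, specialized to the misspecified pair $(g^{*},g)$ in Equations~(\ref{sim2true})--(\ref{model2}). Since the proposition only concerns a single class, I drop the superscript $(r)$ throughout. The two ingredients I would assemble are: (i) a contractive property of the population EM operator $M$ around the misspecified fixed point $\bar\theta$, and (ii) a high-probability bound on the deviation $|M_n(\theta)-M(\theta)|$ over the ball $\mathbb{B}(\bar\theta,|\bar\theta|/4)$.

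First, I would write down the population EM update in closed form for the symmetric two-component model $g$. Because $g$ is a balanced two-Gaussian mixture, the $E$-step yields soft-assignment weights that are symmetric in $\theta$, and the $M$-step reduces to $M(\theta) = \mathbb{E}_{X\sim g^{*}}\!\bigl[X\tanh(\theta X/\sigma^{2})\bigr]$. Invoking the contraction estimate of \textcite{dwivedi2018theoretical} (their Theorem 1 handles exactly this kind of over-parameterized symmetric Gaussian fit), I obtain, for every $\theta \in \mathbb{B}(\bar\theta,|\bar\theta|/4)$,
\begin{equation}
|M(\theta)-\bar\theta|\;\le\;\gamma\,|\theta-\bar\theta|,\qquad \gamma\le e^{-c'\xi^{2}},
\end{equation}
with $\xi=|\theta^{*}|/\sigma$. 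The rate $e^{-c'\xi^{2}}$ comes from the near-linearity of $\tanh$ close to zero, which degrades as the signal-to-noise ratio $\xi$ grows; this is the step that must be imported verbatim from \textcite{dwivedi2018theoretical} because re-deriving it would require the detailed moment computations in their appendix.

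Second, I would apply a uniform concentration bound of the form
\begin{equation}
\sup_{\theta\in \mathbb{B}(\bar\theta,|\bar\theta|/4)} \bigl|M_n(\theta)-M(\theta)\bigr|\;\le\;c_{2}\,|\theta^{*}|\bigl((\theta^{*})^{2}+\sigma^{2}\bigr)\sqrt{\tfrac{\log(1/\delta)}{\mathrm{N}}},
\end{equation}
valid with probability at least $1-\delta$ whenever $\mathrm{N}\ge c_{1}\log(1/\delta)$. This is the standard empirical-process bound for a sub-Gaussian class indexed by $\theta$ in a bounded interval; the prefactor $|\theta^{*}|((\theta^{*})^{2}+\sigma^{2})$ arises because $M_n - M$ is a sum of products of Gaussian samples with bounded $\tanh$, whose variance scales with the third absolute moment of $g^{*}$.

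Third, combining the two ingredients via the triangle inequality gives the one-step recursion $|\theta^{t+1}-\bar\theta|\le \gamma\,|\theta^{t}-\bar\theta|+\varepsilon_{n}$, where $\varepsilon_{n}$ denotes the statistical error above. Unrolling this geometric recursion and summing the geometric series $\sum_{s=0}^{t-1}\gamma^{s}\le 1/(1-\gamma)$ produces exactly the bound claimed in the proposition. The final and main obstacle is the \emph{invariance} step: one must argue that if $\theta^{0}\in \mathbb{B}(\bar\theta,|\bar\theta|/4)$, then every subsequent iterate $\theta^{t}$ remains inside that ball, so the contraction estimate is applicable at every step. I would handle this by a short inductive argument: the hypothesis $\mathrm{N}\ge c_{1}\log(1/\delta)$ guarantees $\varepsilon_{n}\le (1-\gamma)|\bar\theta|/4$, after which the recursion itself shows $|\theta^{t+1}-\bar\theta|\le \max\{|\theta^{0}-\bar\theta|,\varepsilon_{n}/(1-\gamma)\}\le |\bar\theta|/4$, closing the induction and completing the proof.
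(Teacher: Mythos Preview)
Your argument is correct in substance: the contraction--perturbation template, the closed-form population operator $M(\theta)=\mathbb{E}_{g^{*}}[X\tanh(\theta X/\sigma^{2})]$, the uniform deviation bound, the geometric unrolling, and the inductive invariance check are exactly the machinery behind Corollary~1 of \textcite{dwivedi2018theoretical}. One exposition slip: the contraction \emph{improves} (i.e., $\gamma$ shrinks) as $\xi$ grows, so your sentence about the rate ``degrading'' with SNR has the direction reversed; this does not affect the bound you state.

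The paper's own proof, however, takes a different and far shorter route. It observes only that in MDA the GMM parameters are estimated \emph{independently} for each class $r$, so the single-mixture result---Corollary~1 of \textcite{dwivedi2018theoretical}---applies verbatim to each class. In other words, the paper treats the proposition as a direct citation rather than a re-derivation. What you have done is reconstruct the internal argument of that corollary; what the paper does is simply invoke it. Your version is more self-contained and makes the dependence on $\xi$, $\theta^{*}$, and $\sigma$ transparent, but it is also considerably longer than what the paper requires: the only new content relative to \textcite{dwivedi2018theoretical} is the one-line reduction from MDA to per-class GMM fitting, which you correctly anticipated by dropping the superscript $(r)$ at the outset.
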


\begin{proof}
In MDA, GMM parameters for each class $r$ is estimated independently. Therefore, Corollary 1 of \textcite{dwivedi2018theoretical} presented for GMM extends to the case of MDA. 
\end{proof}

Two major implications follow in the case of misspecified models. First, due to incorrect model specification, the estimates $\theta^t$ converge to $\bar{\theta}$ and not the true centroid $\theta^*$. 
However, $\bar{\mu}$ still lies in the neighborhood of the true parameter value and this neighborhood shrinks with reducing $\rho$. Second, the convergence bound gets tighter with increasing sample size and varies in the order of $n^{- \frac{1}{2}}$.
The following numerical example illustrates the convergence result in Proposition 2. We choose $\theta^*=1$, $\rho =0.04$ thus $\eta = 1$,$C_{\rho}=0.49$ and $\bar{\theta} \in [0.51,1.49]$. ${\rm N}^{(r)}=1500$ samples are generated from true distribution $g^*$. For $\theta^0 = 0.6$ and $\delta = 0.05$, the convergence of the estimates of the centroid is illustrated for several EM algorithm runs in Figure~\ref{fig:num_sim_1}(c). The bound at each iteration corresponding to Equation \ref{eq:convg_2} is given by the plot in ``red".

The following key insights result from the two numerical examples for the estimation of model parameters $\mathcal{M}$ for MDA using the EM algorithm.
(i) The choice of sample size to guarantee convergence is determined by the separation of the Gaussian centroids in the true data-generating distribution. (ii) The two numerical examples provide us guidance on the choice of the number of iterations to run the EM algorithm. (iii) In the case of misspecified models, the estimates do not converge to the true centroids. (iv) In the misspecified case, new centroid $\bar{\theta}$ lies in the neighborhood of the true centroid $\theta^*$, and neighborhood size is determined by the separation between the centroids of the Gaussian mixtures.

\section{EGO-MDA Algorithm}

The solution approach in EGO-MDA iterates between two steps. In the first step, spectral-bands $\underline{\alpha}$ are identified using EGO. In the second step, GMM parameters $\mathcal{M}$ in the MDA problem are estimated for said $\underline{\alpha}$ using the EM algorithm. Steps in EGO-MDA are summarized in  Algorithm \ref{alg:EGO-MDAb}. Here,
$\rm{N}^{(r)}$ realizations of spectra $S^{(r)}(f)$ are used as input for class $r$.
% , and $\underline{\alpha}^p~\forall p \in [{\rm N}^{init}]$ 
% \fixme{(CORRECT) double check square bracket notation in TSP}
% , having $L$ bands each, are randomly initialized while ensuring boundaries of spectral-bands $\alpha_l$ are non-overlapping.

\begin{algorithm}

    \SetKwData{Left}{left}\SetKwData{This}{this}\SetKwData{Up}{up}
    \SetKwFunction{MDA}{MDA}\SetKwFunction{EGO}{EGO}
    \SetKwInOut{Input}{Input}\SetKwInOut{Output}{Output} 
    
    \Input{${\rm N}^{(r)}$ realizations of $S^{(r)}(f), r \in \{A,B\}$}
    % \vspace{-0.15 in}
    Initialize $Q = 0$, $P = N^{init}$, Budget constraint B\;
    % \vspace{-0.15 in}
    Initialize $\zeta^Q$ from $(\underline{\alpha}^p,y^p)~\forall p \in [P]$\;
    
    % \vspace{-0.15 in}
    \While{{\rm B} is satisfied}{
        % \vspace{-0.15 in}
        $P = P+1$, $Q = Q+1$\; 
        % \fixme{double check usage of p,P,q,Q: ** seem ok}\;
        % \vspace{-0.15 in}
        Set $\underline{\alpha}^{P} = \underset{\underline{\alpha}}{\rm{argmax}}~\rm{EI}(\underline{\alpha})$\;
        % \vspace{-0.15 in}
        Calculate $\textbf{E}(\underline{\alpha}^P)$\;
        % \vspace{-0.15 in}
        Calculate $y^{P} = \mathcal{D}(\underline{\alpha}^P,\mathcal{M}^{P*})$ by fitting $\mathcal{M}^{P*}$\;
        % \vspace{-0.15 in}
        Estimate GPR model $\zeta^Q$\;
        % \vspace{-0.15 in}
    }
    % \vspace{-0.15 in}
    set $(\underline{\alpha}^*,y^*) = (\underline{\alpha}^{P_*},y^{P_*}) \mid P_* = \underset{p}{\rm argmin}~y^p$\;
    
    % \vspace{-0.15 in}
    \Output{$\underline{\alpha}^*$}
    
\caption{EGO-MDA}\label{alg:EGO-MDAb} 
\end{algorithm}

In each iteration $p$ of EGO-MDA, we consider a tuple $(\underline{\alpha}^p,y^p)$, where $\underline{\alpha}^p$ represents the spectral-band and $y^p = \mathcal{D}(\underline{\alpha}^p,\mathcal{M}^{p*})$ represents the corresponding deviance.
EGO fits a Gaussian Process Regression (GPR) model $\zeta$ to the history of observed tuples $(\underline{\alpha}^p,y^p)$.
The GPR model $\zeta$ 
% \fixme{(REMOVED KRIGING)(previous sentence mentions it as a kriging model: Should this be GPR...)} 
comprises a multivariate Gaussian prior, updated using a Bayesian mechanism, resulting again in a multivariate Gaussian posterior.
An initial GPR model $\zeta^0$ is developed from $(\underline{\alpha}^p,y^p)~\forall p \in [{\rm N}^{init}]$.
Upper case notations --- $P$ and $Q$ --- represent total spectral-bands (initial spectral-bands and spectral-bands identified from EGO search) and iteration of the EGO search respectively that are updated in each iteration of the EGO search. Lower case notations --- $p$ and $q$ --- are variables used as identifiers for spectral-bands and GPR models, respectively.

In EGO-MDA iterations, at step $\textbf{5}$, estimated GPR model $\zeta^p$ is used to determine the next point $\underline{\alpha}^{p+1}$, such that the expected improvement,  ${\rm EI}(\underline{\alpha}^{p+1}) = {\rm E}[{\rm max}({\rm min}(y^p)-\mathcal{D}(\underline{\alpha}^{p+1},\mathcal{M}^{(p+1)*}),0)]$ is maximized. \textcite{Jones1998} show that the closed-form representation of ${\rm EI}(\underline{\alpha}^{p+1})$ is a function of the GPR posterior distribution parameters. 

The minimization ${\mathcal{M}^*} = \underset{\mathcal{M}}{\rm argmin}~\mathcal{D}(\underline{\alpha},{\mathcal{M}})$ in step $\textbf{7}$ of Algorithm \ref{alg:EGO-MDAb} is solved by fitting a Mixture Discriminant Analysis (MDA) classifier. 
Initialization of ${\boldsymbol\mu}^{(r)}_k, \pi^{(r)}_k$ is obtained from K-means clustering as described in \textcite{hastie1996discriminant}. MDA classifier is obtained as the MLE of the model parameters $\mathcal{M}^{(r)}=\{\pi^{(r)}_k, {\boldsymbol\mu}^{(r)}_k, \Sigma | r = \{A,B\}, k \in [K] \}$. Accuracy of estimates $\mathcal{M}$ in step $\textbf{7}$ deteriorates in case of a misspecified mixture model or low separation between centroids as discussed in Section 2. 
% \fixme{add some details about the method - but don't rewrite source from which you have developed this paragraph (the paragraph is only a summary) - add some MDA details.}

The ($\underline{\alpha}^p,y^p$) obtained from steps \textbf{5-8} is used to update the GPR model $\zeta^q$. 
% \fixme{(ADDED) use of p and q, although may be right, on the outset seems confusing - please mention in some details what its purpose is in the body of the text and not only in the algorithm.}
This completes one iteration of the EGO search in steps $\textbf{4-8}$. 
The iterations stop when budget constraint ${\rm B}$ specified based on the number of iterations or improvement over the deviance is satisfied.
\vspace{-0.35 in}
\section{Case Studies}

\noindent We consider two case studies to assess the performance of EGO-MDA. The first study uses a synthetic data set that meets the assumptions underpinning EGO-MDA to assess the performance relative to the ground truth.
The second case study compares EGO-MDA relative to other contemporary methods in an industrial application.

\vspace{-0.25 in}
\subsection{Synthetic Data Set}

\noindent Let us consider ${\rm{N}^{(A)}} = {\rm{N}^{(B)}} = 1000$ realizations each of two spectra defined by the GMMs 

\vspace{-0.3 in}
\begin{align}
    S^{(A)}(f) \sim \delta_{f \in \alpha^*_1}[\frac{4}{5}\mathcal{N}(0,1) + \frac{1}{5}\mathcal{N}(69 - 16|f-23|,1)]
    + \delta_{f \notin \alpha^*_1} [\mathcal{N}(0,1)]
\label{eq:classA_sim}
\end{align}
\vspace{-0.5 in}
\begin{align}
    S^{(B)}(f) \sim \delta_{f \in \alpha^*_2}[\frac{4}{5}\mathcal{N}(0,1) + \frac{1}{5}\mathcal{N}(84-16|f-53|,1)]
    +
    \delta_{f \notin \alpha^*_2}[\mathcal{N}(0,1)]
\label{eq:classB_sim}
\end{align}

\begin{figure}
\begin{center}
\includegraphics[width = 3 in,height = 2.5 in]{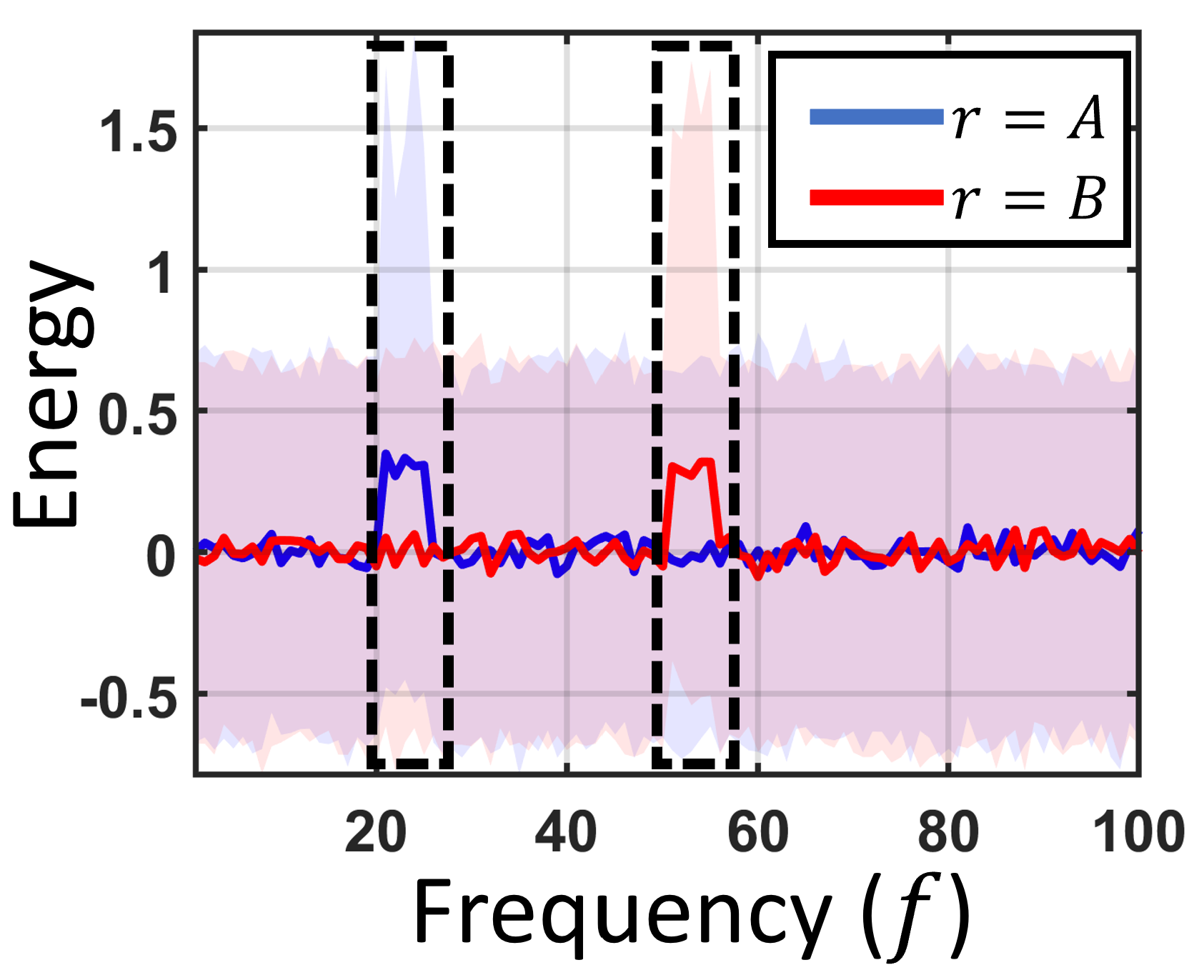}
\end{center}
\vspace{-0.1 in}
\caption{ 
Simulated frequency spectrum for two classes $r=\{A,B\}$ with median energy illustrated as line plot and silhouettes representing the $25^{th}$ and $75^{th}$ percentile energy. Black broken boxes indicate the true location of $\alpha^*_1$ and $\alpha^*_2$.
}
\label{fig:simulated_spectra}
\end{figure}

\begin{figure*}
\begin{center}
\includegraphics[width = \textwidth]{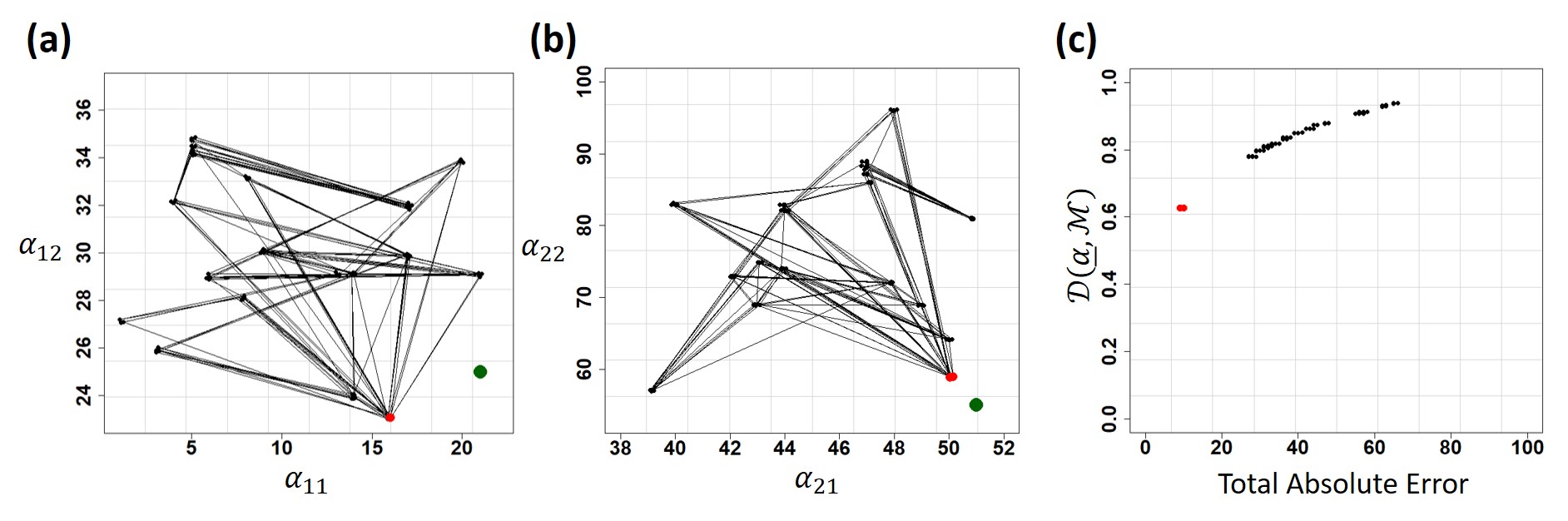}
\end{center}
\vspace{-0.1 in}
\caption{ 
EGO-MDA solutions with data points in red indicating the minimum deviance solutions (a) Optimal solution search for spectral-band $\alpha_1$. The true solution ($\alpha^*_{11},\alpha^*_{12}$) is represented as a green data point (b) Optimal solution search for spectral-band $\alpha_2$. The true solution ($\alpha^*_{11},\alpha^*_{12}$) are represented as a green data point (c) Relation between deviance and total absolute error $\sum_{i = \{1,2\}}|\alpha_{i1}-\alpha^*_{i1}|+|\alpha_{i2}-\alpha^*_{i2}|$ indicating deviance to be an appropriate choice as an objective. Spectral-bands achieving minimum deviance also achieve minimum error and are therefore relatively accurate with respect to spectral-band width and location compared to spectral-bands with high deviance.
}
\label{fig:EGO_simulation_solution}
\end{figure*}

\noindent where $\alpha^*_1 = [21,25]$ Hz,  $\alpha^*_2=[51,55]$ Hz, and $\delta_{(\cdot)}$ is the Kronecker delta. Spectral-bands $\alpha^*_1$ and $\alpha^*_2$ indicate the changes taking place in the spectrum when comparing between the classes $r$. The two boundaries of spectral-band $\alpha^*_i$ are represented by $\alpha^*_{i1}$ and $\alpha^*_{i2}$ respectively, such that $\alpha^*_{i1} < \alpha^*_{i2}$. As an example for $\alpha^*_1$, $\alpha^*_{11} = 21$ and $\alpha^*_{12} = 25$. For $\alpha^*_2$, $\alpha^*_{21} = 51$ and $\alpha^*_{22} = 55$. Figure~\ref{fig:simulated_spectra} illustrates the simulated spectra.

EGO-MDA aims to determine the spectral-bands $\underline{\alpha} = \{\alpha_1,\alpha_2\}$ that minimize the deviance in Equation (\ref{deviance}). We set $\eta=1/{\rm log}w_0$ , where $w_0$ is the maximum deviation possible (here, $w_0=100$ was set). This choice of $\eta$ is chosen to appropriately scale the penalty term between 0 and 1.
The hyperparameters of the GPR model $\zeta$ in EGO-MDA were chosen to balance exploitation and exploration as described in \textcite{ginsbourger2015package}. Accordingly, the Gaussian covariance kernel was employed with range parameters $\theta_R = 0.99$ and $\sigma_R = 9$.
In this implementation, EGO-MDA was run for 200 iterations, yielding 200 spectral-banding configurations $\{\underline{\alpha}^p:p \in \{1,\ldots,200\}\}$.

EGO-MDA solutions for each spectral-band $\alpha_i, i = \{1,2\}$ are presented as ($\alpha_{11}$,$\alpha_{12}$) and ($\alpha_{21}$,$\alpha_{22}$) representing the spectral boundaries in Figure~\ref{fig:EGO_simulation_solution}(a) and Figure~\ref{fig:EGO_simulation_solution}(b), respectively. Correspondingly, true spectral-bands $\alpha^*_1$ and $\alpha^*_2$ are illustrated as green data points. 
The solutions closer to the green dots are more accurate (with minimum error) spectral-band solutions than the farther solutions.
The total absolute error 
in spectral-bands is defined as the sum of the absolute difference between the boundaries of $\alpha_i$ and the true bands $\alpha^*_i$.
The total absolute error is given as $\sum_{i = \{1,2\}}|\alpha_{i1}-\alpha^*_{i1}|+|\alpha_{i2}-\alpha^*_{i2}|$. 
The top 5 percentile spectral-bands based on minimum values of the deviance $\mathcal{D}(\underline{\alpha},\mathcal{M})$ are represented as red data points in Figure~\ref{fig:EGO_simulation_solution}. These top 5 percentile spectral-bands also achieve minimum total absolute error compared against true spectral-bands $\alpha^*_1$ and $\alpha^*_2$ as illustrated in Figure~\ref{fig:EGO_simulation_solution}(c). Therefore indicating minimization of deviance achieves ground truth spectral-bands. We note that the minimum deviance $\mathcal{D}(\underline{\alpha},\mathcal{M})$ is $0.5$ (see Figure~\ref{fig:EGO_simulation_solution}(c)) because $w_0=100$ in Equation \ref{deviance}.

Search by exploitation in EGO-MDA is evident from the clustering of solutions observed in Figure~\ref{fig:EGO_simulation_solution}(a) and Figure~\ref{fig:EGO_simulation_solution}(b).
EGO-MDA identifies several solution neighborhoods (clusters) and exploits each neighborhood to identify the local optima. Identification of several neighborhoods indicates the exploration behavior of EGO-MDA.
A histogram plot of the L2-norm distance between subsequent solutions identified during EGO-MDA search is presented in Figure~\ref{fig:distance_exploit}.
The largest frequency in the histogram lies closer to 0 thus indicating over 35\% of the subsequent solutions are adjacent, further indicating exploitation within a neighborhood.
Thus, the EGO-MDA search balances between exploration and exploitation.

\begin{figure}
\begin{center}
\includegraphics[width = 6 in,height = 1.5 in]{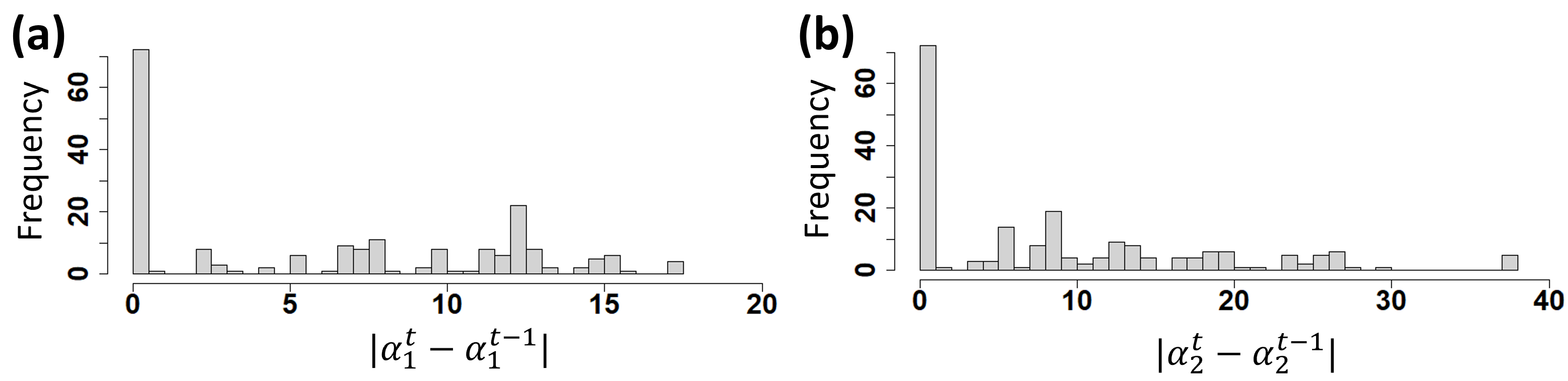}
\end{center}
\vspace{-0.1 in}
\caption{ 
Histogram of distance (L2 norm) between solutions identified for subsequent iterates of the EGO-MDA algorithm (a) $\alpha_1$ (b) $\alpha_2$.
}
\label{fig:distance_exploit}
\end{figure}

Out of the 200 spectral-bands identified, all spectral-bands overlap with the dormant bands. 52\% of spectral-bands have complete overlap with true bands $\underline{\alpha}^*$ and remaining 48\% have partial overlap with at least one true band $\alpha^*_1$ or $\alpha^*_2$. 
As introduced in Section 1, the correct bandwidth for spectral-bands is important in maintaining high SNR. While complete overlap with true bands entirely captures the discriminating regions, spectral-bands that are exceeding large compared to the size of true bands $\underline{\alpha}^*$ can have poor SNR. It is noted that the average bandwidth of 52\% of spectral-bands having complete overlap with true bands exceeds the true bandwidth by a factor of 4.36. 
In comparison, the average bandwidth of the top 5 percentile bands with minimum deviance exceeds the true bandwidth by a factor of 0.78.
Such a large deviation in bandwidth is observed primarily because of the setup of the simulation with only 100 frequency points and the true spectral-bands $\underline{\alpha}^*$ occupying an even smaller proportion (10\%) of the frequency domain.
However, in practice, spectral-bands span over a larger frequency range as seen in the following case study, so the factor of error is relatively small.

\vspace{-0.25 in}
\subsection{Spectral-Banding for Shell Polishing Process}

\noindent A case study of the EGO-MDA algorithm for anomaly detection in a polishing process to finish 2mm diameter spherical ceramic shells with high-density carbon coating  is presented. 
These polished spherical shells (see Figure \ref{fig:shell_defect} (a)) are a critical component of the novel Inertial Confinement Fusion process being investigated at the Lawrence Livermore National Laboratory (LLNL).
A Record fusion yield of 72\% was achieved in 2021 at the National Ignition Facility. \textcite{zylstra2022experimental} note that this record yield was achieved due to the significantly higher surface quality of the shells --- small number of pits and voids.
% The surface finish of the shells is a major determinant of the fusion yield obtained at the National Ignition Facility.
Anomalies such as pitting and cracking of shells during polishing are known to be a major quality challenge during the polishing process (see Figure \ref{fig:shell_defect} (b,c)). 

\begin{figure*}
\begin{center}
\includegraphics[width = \textwidth]{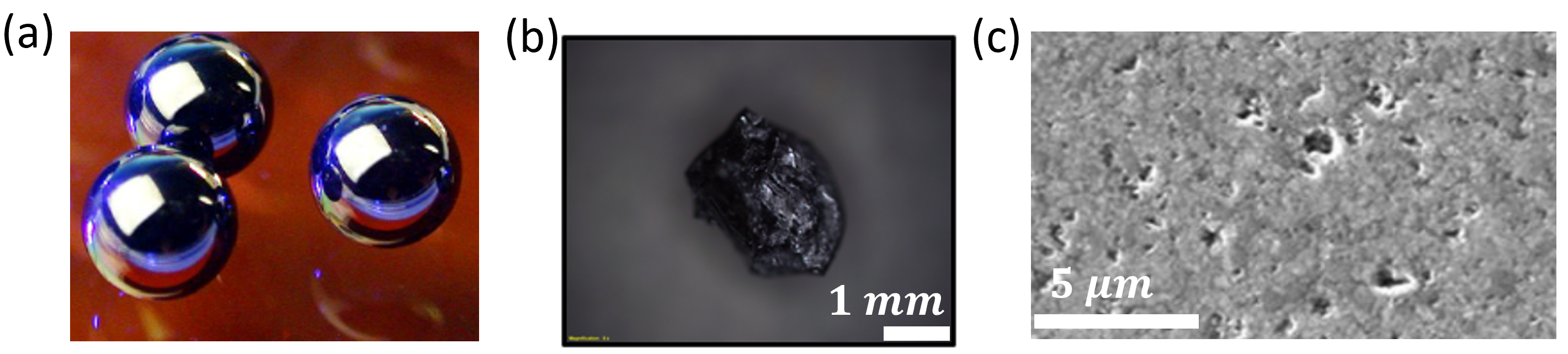}
\end{center}
\vspace{-0.1 in}
\caption{ 
(a) Finished Shells (b) Optical micrograph of shell debris, which can lead to damage of shells during polishing (c) Scanning electron micrograph of pits on the polished shell surface. (Figures (a) and (c) are reproduced from \textcite{HuangDefects2016})
}
\label{fig:shell_defect}
\end{figure*}

In-situ polishing process monitoring can help with the prognosis of events leading to anomalies in the polishing process which can further result in deterioration of surface quality. 
The spectral-bands which can discriminate spectra between the polishing of normal and damaged shells can be used for monitoring and detection of incipient anomalies. 
LLNL attached vibration sensors to their polishing machine to provide a cost-effective and reliable monitoring of the process for surface finish $S_a$ \parencite{shilan2022} and especially to detect events such as collisions.
Vibration signals are known to allow early detection of anomalies. This is because anomalies such as cracking are known to modify the spectral characteristics of the vibration signals. It is therefore important to identify the spectral-bands that capture the onset of an anomaly. In this case study, we compare the performance of EGO-MDA relative to three other methods in determining these spectral-bands.

Here, we collected  ${\rm N}^{(r)} = 600$ spectra samples of $S^{(r)}(f)$ corresponding to vibration signals from a polishing process in two process operation conditions --- before  ($r=A$) and after ($r=B$) the onset of an anomaly. Indeed, the assumption is that the process underlying each sample spectrum is stationary so that the resulting spectral-band energies can be modeled as GMM.

The performance of EGO-MDA for spectral-banding is compared against three other methods.
(1) \textit{RF-MDA}: Spectral-bands of uniform size are defined, and a Random Forest (RF) classifier is developed to determine the important spectral-bands based on high values of the GINI index \parencite{hastie2009elements}. Deviance of these uniform spectral-bands is identified by developing an MDA classifier. 
(2) \textit{R-MDA}: Spectral-bands of non-uniform size are chosen based on random initialization of bandwidth and location within a predetermined neighborhood. These randomly initialized bands are also used as initialization for EGO-MDA. 
(3) \textit{NM-MDA}: Important spectral-bands from RF-MDA are optimized using Nelder-Mead (NM) simplex optimization. Comparison against elementary approach such as uniform spectral-banding are ignored due to their inferior performance arising from the limitations discussed in Section 1.

The resulting deviance values for the aforementioned methods in comparison to EGO-MDA are presented in Figure~\ref{fig:resulting_bands} (a).
The spectral-bands from RF-MDA and R-MDA, used as respective initialization for NM-MDA and EGO-MDA have relatively higher deviance values. Optimization of initial spectral-bands using EGO and NM resulted in 99.28\% and 83.58\% improvement over R-MDA and RF-MDA, respectively in the median values of deviance.
Both search procedures --- EGO and NM --- improve the deviance values, suggesting that both strategies of random initializations of bands and the important uniform-sized bands can be improved by updating their widths and locations. The large spread observed in deviance of R-MDA is significantly reduced using EGO-MDA on these initialized bands.
 
Larger values of deviance for R-MDA are observed because spectral-bands are determined without prior knowledge unlike in RF-MDA, where important spectral-bands are chosen based on the GINI index values. NM-MDA achieves improvements over RF-MDA, because physical processes may not necessarily manifest themselves in bands of uniform size as assumed in RF-MDA. Therefore, NM-MDA can adjust the band sizes further to improve the deviance. EGO-MDA performs the best because EGO performs relatively better in large dimensions, in contrast to NM search which deteriorates with increasing dimensions. 

Ultimately, resulting ten spectral-bands from EGO-MDA achieve the least deviance with a 77.27\% lower median deviance than NM-MDA.
These EGO-MDA spectral-bands when used for classification have 91.6\% accuracy compared to 88.7\% accuracy for \textit{RF-MDA}.
% These spectral-bands will also have minimum error rate (classification accuracy of 90 \%) as observed in the case of the synthetic data set in Figure~\ref{fig:EGO_simulation_solution}.
The width and the location of the optimal spectral-bands, as identified by EGO-MDA are presented in Figure~\ref{fig:resulting_bands} (b). 
Among the ten bands, $\alpha_2$ and $\alpha_7$ indicate opposite trends when comparing among two classes --- class $B$ is characterized by energy reduction in $\alpha_2$ and energy increase in $\alpha_7$. 
Furthermore, these spectral-bands indicate changes in energy over time that can be very difficult to identify from the time series signal alone.
The knowledge about spectral-band showing change can be used to further investigate the physical phenomena causing changes. 
Based on the behavior of these identified spectral-bands, process monitoring systems can be developed using these spectral-bands as channels to detect sharp change points \parencite{iquebal2020change} and incipient changes \parencite{wang2018dirichlet} occurring during the manufacturing process.

\begin{figure}
\begin{center}
\includegraphics[width = 6 in,height = 1.5 in]{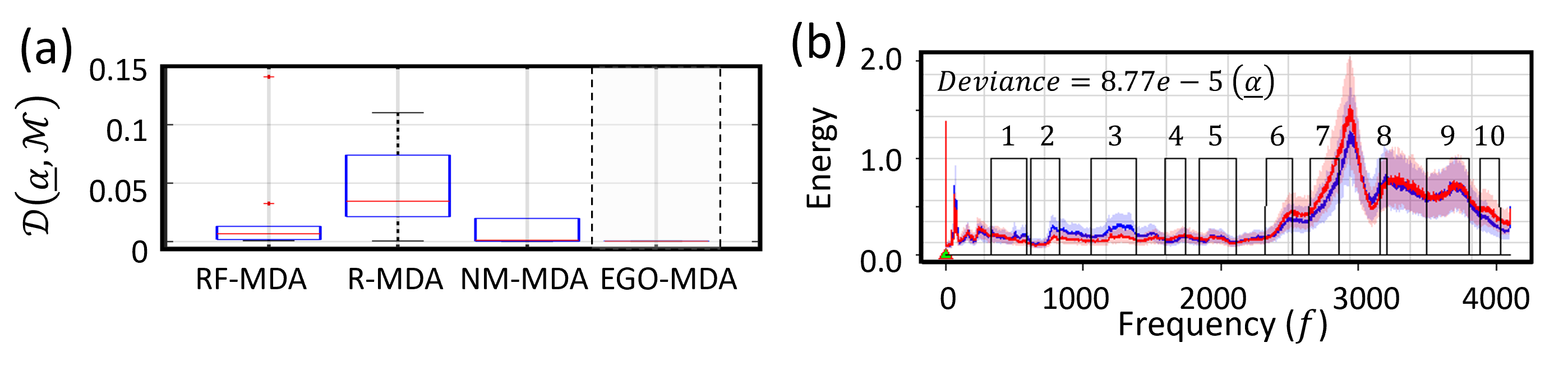}
\end{center}
\vspace{-0.1 in}
\caption{EGO-MDA performance assessment (a) Comparison against competing methods. (b) Final spectral-bands identified by EGO-MDA.}
\label{fig:resulting_bands}
\end{figure}

\section{Conclusions and Future Work}

\noindent In this work, we provide an approach to address an important, yet mostly neglected problem of spectral-banding. 
The problem of spectral-banding is a novel learning task.
While many natural and engineering processes are commonly compared based on differences in the energy of certain spectral-bands, there does not exist an unsupervised approach to identify the spectral-bands that optimally capture these differences or changes.
EGO-MDA introduced in this paper addresses the identification of optimal spectral-bands to discriminate between two classes of spectra, such as those from a normal versus an anomalous process, by reducing the false negatives (not selecting the bands that capture the energy differences in the spectra) and false positives (selecting the bands whose energy content does not change) and reducing the number of channels required for monitoring the process. These channels can also lead to the discovery of novel phenomena in engineering processes because frequency channels can directly indicate the physical attributes of a phenomenon taking place.  EGO-MDA can be useful in several domains of engineering applications that involve spectral processing, including the monitoring of mechanical systems, spectroscopic studies to identify chemical compounds, and acoustic signal processing for speech recognition.

We underscore the performance limitations of EGO-MDA in pathological cases such as low separation between centroids ($\rho$) of the mixing components and underspecified models. A remedy to such pathological cases can be obtained by appropriately choosing the sample size and the number of iterations of the algorithm. In a synthetic data set, EGO-MDA accurately identifies spectral-bands within a 5\% error of maximum band size. In a real-world application case study, EGO-MDA outperforms other methods such as bands of uniform size obtained from the Random Forest classifier, randomized bands, and Nelder-Mead simplex search. 

An important parameter of EGO-MDA for spectral-banding is the number of spectral-bands $L$. The choice of $L$ is specified $\textit{a priori}$ and it can be challenging to identify the optimal number of spectral-bands. This challenge remains to be addressed.
Two possible approaches can be employed to address this challenge. In the first approach, during initialization, $L$ can be large and we allow the merging of spectral-bands during the iterations of EGO-MDA. This leads to final spectral-banding comprising a smaller number of larger spectral-bands that are the optimal number of spectral-bands required for discriminating. However, this approach can reduce the dimension of feature space during the search process leading to additional sophistication in the search procedure. 
Another alternative is to adopt a non-parametric Bayesian approach by introducing a prior distribution over the choice of $L$. In this approach, the appropriate choice of $L$ can be determined using stochastic processes for modeling the prior.

The application can be further extended to image segmentation, specifically involving images from different classes where fixed regions show changes.
In this scenario, criteria for image segmentation is the changes taking place between the two image classes within the fixed regions.
Continuous pixel boundaries --- much like spectral-band boundaries ---  around the region indicating change correspond to the segmented image.

\vspace{-0.25 in}
\section*{Acknowledgments}

The authors thank Suhas Bhandarkar and his team at LLNL for providing useful information about the shell polishing process during the development of the method. The authors would like to especially thank Chantel Aracne-Ruddle and Sean Michael Hayes for supplying the vibration data sets from the polishing process at LLNL.

\vspace{-0.25 in}
\section*{Funding}
The authors acknowledge the generous support from Department of Energy grant no. DE-AC52-07NA27344 through Subcontract no. B646055 and B640459 via the Lawrence Livermore National Laboratory, National Science Foundation under grant no. IIS-1849085, and Texas A\&M University X-grants.

% \bibliographystyle{chicago-authordate}

% \bibliography{references.bib}

\vspace{-0.25 in}
\printbibliography

\end{document}